\newtheorem{theorem}{Theorem}[section]
\theoremstyle{definition}
\newtheorem{definition}[theorem]{Definition}
\theoremstyle{remark}
\newtheorem{example}[theorem]{Example}
\newcommand{\wt}{\widetilde}
\newcommand{\hkappa}{h^{(\kappa)}}
\newcommand{\thetakappa}{\theta^{(\kappa)}}
\newcommand{\kappatheta}{\kappa^{(\theta)}}
\newcommand{\algTPI}{TLPI}
\newcommand{\algQPI}{\text{QLPI}}
\newcommand{\algQDQN}{\text{QL-DQN}}
\newcommand{\appxVstar}{$\wt V^\star$}
\newcommand{\calM}{\mathcal{M}}
\newcommand{\calS}{\mathcal{S}}
\newcommand{\calA}{\mathcal{A}}
\newcommand{\calX}{\mathcal{X}}
\newcommand{\calY}{\mathcal{Y}}
\newcommand{\bbE}{\mathbb{E}}
\newcommand{\bbR}{\mathbb{R}}
\DeclareMathOperator*{\argmax}{arg\,max}
\newcommand{\costh}[1]{c ( #1 )}
\title{Planning and Learning with Adaptive Lookahead}
\author{
    %Authors
    % All authors must be in the same font size and format.
    Aviv Rosenberg\textsuperscript{\rm 1}\thanks{Research conducted while the author was an intern at Nvidia Research.}, Assaf Hallak\textsuperscript{\rm 2}, Shie Mannor\textsuperscript{\rm 2,3}, Gal Chechik\textsuperscript{\rm 2,4}, Gal Dalal\textsuperscript{\rm 2}
}
\title{My Publication Title --- Single Author}
\author {
    Author Name
}
\title{My Publication Title --- Multiple Authors}
\author {
    % Authors
    First Author Name,\textsuperscript{\rm 1}
    Second Author Name, \textsuperscript{\rm 2}
    Third Author Name \textsuperscript{\rm 1}
}
\begin{document}

\maketitle

\begin{abstract}
Some of the most powerful reinforcement learning frameworks use planning for action selection. Interestingly, their planning horizon is either fixed or determined arbitrarily by the state visitation history. Here, we expand beyond the naive fixed horizon and propose a theoretically justified strategy for adaptive selection of the planning horizon as a function of the state-dependent value estimate. We propose two variants for lookahead selection and analyze the trade-off between iteration count and computational complexity per iteration. We then devise a corresponding deep Q-network algorithm with an adaptive tree search horizon. We separate the value estimation per depth to compensate for the off-policy discrepancy between depths. Lastly, we demonstrate the efficacy of our adaptive lookahead method in a maze environment and Atari.
\end{abstract}

\section{Introduction}

The celebrated Policy Iteration (PI) \cite{howard1960dynamic} and Value Iteration (VI) \cite{sutton2018reinforcement} algorithms are the basis for most state-of-the-art reinforcement learning (RL) algorithms. Since both PI and VI are based on a one-step greedy approach for policy improvement, so are the most commonly used policy-gradient \cite{schulman2017proximal, haarnoja2018soft} and Q-learning \cite{mnih2013playing, hessel2018rainbow} based approaches. In each iteration, they perform an improvement of their current policy by looking one step forward and acting greedily. While this is the simplest and most common paradigm, stronger performance was recently achieved using multi-step lookahead. Notably, in AlphaGo \citep{silver2018general} and MuZero  \cite{schrittwieser2020mastering}, the multi-step lookahead is implemented via Monte Carlo Tree Search (MCTS) \citep{browne2012survey}. In MCTS, the search depth is not chosen adaptively but gradually increases with the aggregation of state visitations.

\begin{table}[t]
    % \setlength{\tabcolsep}{2.5pt} % Default value: 6pt
    % \!\!\! %negative spacing
    \begin{center}
        \begin{tabular}[b]{|c|c|c|c|c|}
            \hline
            Algorithm & \#Iterations & Iteration complexity
            \\
            \hline \hline
            PI \cite{scherrer2016improved}& $1$ & $ c(1)$
            \\
            \hline
               H-PI &  $\frac{1}{H}$ &  $c(H)$
               \\
            \cite{efroni2018beyond} & &
            \\
            \hline
            TLPI (this paper)& $\frac{1}{h^{(\kappa)}}$ & $c(1) + \theta^{(\kappa)} c(h^{(\kappa)})$
            \\
            \hline
            QLPI (this paper)& $\frac{\log \gamma}{\log \kappa} \left(\leq  \frac{1}{h^{(\kappa)}}\right)$ & $O \left( \sum_{h=1}^H \theta_h c(h) \right)$
            \\
            \hline
        \end{tabular}
    \end{center}
    \caption{Algorithm comparison summary. The iterations count (number of iterations) is divided by $S (A-1) \frac{\log (1-\gamma)}{\log \gamma}$. The iteration complexity (computational complexity per iteration) is divided by $S$. $c(k)$ is the computational complexity of $k$-step planning. $h^{(\kappa)}$ is the smallest integer $h$ s.t. $\gamma^{h^{(\kappa)}} \le \kappa$. $\theta^{(\kappa)}$ is the fraction of $\kappa$-contracting states. $\theta_1,\dots,\theta_H$ are the contraction quantiles sizes.}
    \label{table:regret bounds}
\end{table}

Several recent works rigorously analyzed the properties of multi-step lookahead in common RL schemes \citep{efroni2018beyond,moerland2020think,hallak2021improve,sikchi2022learning}. These and other related literature studied a fixed planning horizon chosen {\em in advance}. However, both in simulated and real-world environments, a large variety of states benefit differently from various lookahead horizons. A grasping robot far from its target will learn very little from looking a few steps into the future, but if the target is within reach, much more precision and planning are required to grasp the object correctly. Similarly, at the beginning of a chess game, lookahead grants little information about which move is better, while agents in mid-game intricate situations benefit immensely from considering all future possibilities for the next few moves. 
Indeed, in this work, we devise a  methodology for adaptive selection of the planning horizons in each state and show it achieves a significant speed-up of the learning process. 

We propose two complementing approaches to determine the suitable horizon per state in each PI iteration. To do so, we keep track of the room for improvement for the value function. Our first algorithm, Threshold-based Lookahead PI (\algTPI{}), ensures the desired convergence rate and minimizes the computational complexity for each iteration. Alternatively, our second algorithm, Quantile-based Lookahead PI (\algQPI{}), takes the per-iteration computational complexity as a given budget and aims for the best possible convergence rate. We prove that both \algTPI{} and \algQPI{} converge to the optimum and achieve a significantly lower computational cost than their fixed-horizon alternative (see Table~\ref{table:regret bounds}).

Next, we devise \algQDQN{}: a DQN \cite{mnih2013playing} variant of \algQPI{}, where the policy chooses an action by employing an exhaustive tree search \cite{hallak2021improve}  looking $h$ steps into the future. The tree-depth $h$ is chosen adaptively per state to achieve an overall improved convergence rate at a reduced computational cost. To sustain on-policy consistency while generalizing over the multiple depths, we use a different value network per depth, where the first layers are shared across networks. We test our method on Atari and show it improves upon a fixed-depth tree search.

To summarize, our contributions are the following. First, we propose to use adaptive state-dependent lookahead and devise corresponding algorithms. Our analysis shows they converge with improved computational complexity. Second, we extend our approach to online learning with a DQN variant that uses an exhaustive tree search of adaptive depth and per-depth value network. Third, we evaluate the proposed methods on maze and Atari environments and show better results than a fixed lookahead horizon.
% \begin{enumerate}
%     \item We propose to use adaptive state-dependent lookahead and devise corresponding algorithms. Our analysis shows they converge with improved computational complexity.
%     \item We extend our approach to online learning with a DQN variant that uses an exhaustive tree search of adaptive depth and per-depth value network.
%     \item We evaluate the proposed methods on maze and Atari environments and show better results than a fixed lookahead horizon.
% \end{enumerate}

\section{Preliminaries}
\label{sec:preliminaries}

We consider a discounted MDP $\calM = (\calS, \calA, P, r,\gamma)$, where $\calS$ is a finite state space of size $S$, $\calA$ is a finite action space of size $A$, $r: \calS \times \calA \to [0,1]$ is the reward function, $P: \calS \times \calA \to \Delta_\calS$ is the transition function, and $\gamma \in (0,1)$ is the discount factor.
Let $\pi: \calS \to \calA$ be a stationary policy, and $V^\pi \in \bbR^S$ be the value function of $\pi$ defined by $V^\pi(s) = \bbE \left[ \sum_{t=0}^\infty \gamma^t r(s_t,\pi(s_t)  \mid s_0 = s \right]$.

The goal of a planning algorithm is to find the optimal policy $\pi^\star$ such that, for every $s \in \calS$,
\[
    V^\star(s)
    =
    V^{\pi^\star}(s)
    =
    \max_{\pi: \calS \to \calA} V^\pi(s).
\]
Given a policy $\pi$, let $T^\pi: \bbR^S \to \bbR^S$  be the Bellman operator:
$
    T^\pi [V] = r^\pi + \gamma P^\pi V,
$
where $r^\pi(s) = r(s,\pi(s))$ and $P^\pi(s' | s) = P(s' | s,\pi(s))$.
It is well known that the value of policy $\pi$ is the unique solution to the linear equations: $T^\pi [V^\pi] = V^\pi$.
Let $T: \bbR^S \to \bbR^S$ be the optimal Bellman operator defined as:
\[
    T[V](s)
    =
    \max_{a \in \calA} r(s,a) + \gamma \sum_{s' \in \calS} P(s' | s,a) V(s').
\]
Then, the optimal value is the unique solution to the nonlinear equations $T[V^\star] = V^\star$ and $T$ is a $\gamma$-contraction in the max-norm over the state space:
\[
\lVert V^\star - T[V^{\pi}] \rVert_\infty
\le
    \gamma \lVert V^\star - V^{\pi} \rVert_\infty.
\]

\subsection{PI and $h$-PI}
PI starts from an arbitrary policy $\pi_0$ and performs iterations that consist of: (1) an evaluation step that evaluates the value of the current policy, and (2) an improvement step that performs a 1-step improvement based on the computed value. That is, for $n=0, 1, 2, \dots$,
\[
    \pi_{n+1}(s)
    =
    \argmax_{a \in \calA} r(s,a) + \gamma \sum_{s' \in \calS} P(s' \mid s,a) V^{\pi_n}(s').
\]
By the contraction property of the Bellman operator, one can prove that PI finds the optimal policy after at most ${\left\lceil (\log \frac{1}{\gamma})^{-1} S (A - 1) \log \frac{1}{1 - \gamma} \right\rceil}$ iterations \citep{scherrer2016improved}.

The PI algorithm can be extended to $h$-PI by performing $h$-step improvements (instead of $1$-step).
Formally, define the $Q$-function of policy $\pi$ with a $h$-step lookahead as 
\[
    Q^\pi_h(s,a) = \max_{ \{ \pi_t \}_{t=1}^{h-1} } \bbE^{s,a} \left[ \sum_{t=0}^{h-1} \gamma^t r(s_t,\pi_t(s_t)) + \gamma^h V^{\pi}(s_h) \right],
\]
where $\bbE^{s,a} [\cdot] = \bbE [\cdot | s_0=s,\pi_0(s) = a]$.
Then, the update rule of $h$-PI is
$\pi_{n+1}(s) = \argmax_{a \in \calA} Q^{\pi_n}_h(s,a)$.
The operator induced by $h$-step lookahead is a $\gamma^h$-contraction which allows to reduce a factor of $h$ from the bound on the number of iterations until convergence \citep{efroni2018beyond}, i.e., it is bounded by $\left\lceil ( h \log \frac{1}{\gamma})^{-1} S (A - 1) \log \frac{1}{1 - \gamma} \right\rceil$.

Multi-step lookahead guarantees that the number of iterations to convergence is smaller than the $1$-step lookahead, but it comes with a computational cost.
Computing the $h$-step improvement may take exponential time in $h$.
In tabular MDPs, this can be mitigated with the use of dynamic programming \citep{efroni2020online}, while in MDPs with large (or infinite) state space, MCTS \cite{browne2012survey} or the alternative exhaustive tree-search \cite{hallak2021improve} are used in forward-looking fashion. To compare our algorithms, in the rest of the paper we measure the computational complexity as follows:

\begin{definition}
    \label{def:comp-cost-h}
    Let $\costh{h}$ be the computational cost of performing a $h$-step improvement in a single state.
    For example, in a deterministic full $A$-ary tree we have $\costh{h} = O( A^h ).$
\end{definition}

\section{Motivating Example}
\label{sec:motivation}

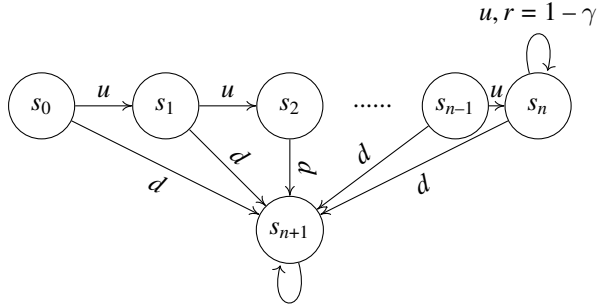
\begin{figure}[t]
    \centering
\begin{tikzpicture}[shorten >=0pt,node distance=0.75cm,on grid,auto,scale=0.4,every node/.style={scale=0.8}] 
    	\node[state] (s_0) at (0,0)  {$s_{n+1}$}; 
    	\node[state] (s_1) at (-6,3) {$s_0$}; 
    	\node[state] (s_2) at (-3,3) {$s_1$};
    	\node[state] (s_3) at (0,3) {$s_2$};
    	\node[state] (s_n) at (4,3) {$s_{n-1}$};
    	\node[state] (s_S) at (7,3) {$s_n$}; 
    	\node at (2,3) {......};
    	\path[-{>[scale=1.5,width=2.5]}]
    	(s_0) edge [loop below] node {$ $} ()
    	(s_S) edge [loop above] node {$u,r=1-\gamma$} ()
    	(s_1) edge	node [below,sloped,inner sep=3pt] {$d$} (s_0)
    	(s_1) edge	node [above,sloped,inner sep=3pt] {$u$} (s_2)
    	(s_2) edge	node [above,sloped,inner sep=3pt] {$d$} (s_0)
    	(s_2) edge	node [above,sloped,inner sep=3pt] {$u$} (s_3)
    	(s_3) edge	node [above,sloped,inner sep=3pt] {$d$} (s_0)
    	(s_n) edge	node [above,sloped,inner sep=3pt] {$u$} (s_S)
    	(s_n) edge	node [above,sloped,inner sep=3pt] {$d$} (s_0)
    	(s_S) edge	node [below,sloped,inner sep=3pt] {$d$} (s_0);
    \end{tikzpicture}
    \caption{Chain MDP example with deterministic transitions and rewards 0 everywhere except for $r(s_n, u) = 1 - \gamma.$ Using a fixed horizon $h$ per state for each PI iteration leads to the same convergence rate as using horizon $\ell$ in only a single state for $\ell=2,\dots,h$, but at a much higher computational cost.}
    \label{fig:convergence-rate-example}
\end{figure}

To show the potential of our approach, consider the chain MDP example in \cref{fig:convergence-rate-example}:
\begin{example} [Chain MDP]
\label{exmp: chain mdp}
Let $\mathcal{M}$ be an MDP with $n+1$ states $s_0,s_1,\dots,s_n$ and a single sink state $s_{n+1}$. 
Each of the $n+1$ states transitions to the consecutive state with action $u,$ and to the sink state with action $d.$
All rewards are $0$ except for state $s_n$ in which $u$ yields reward $1 - \gamma$.
\end{example}

Now consider the standard PI algorithm initialized with $\pi_0(s_i) = d$ for all $i \in \{0, \dots, n\}$.
Since the reward at the end of the chain needs to propagate backward, in each iteration the value of only a single state is updated. Thus, PI takes exactly $n$ iterations to converge to the optimal policy $\pi^\star(s_i) = u$ for all $i$. Instead, with a fixed horizon $h=2$, the reward propagates through two states in each iteration (instead of one) and therefore convergence takes $\lceil n/2 \rceil$ iterations. Generally, PI with a fixed horizon $h$, i.e., $h$-PI, converges in  $\lceil n/h \rceil$ iterations.

While $h$-PI converges faster (in terms of iterations) as $h$ increases, in most states, performing $h$-step lookahead does not contribute to the speed-up at all. 
In our example, we can achieve exactly the same convergence rate as $2$-PI by using a $2$-step lookahead in only a single state in each iteration (and $1$-step in all other states).
Specifically, we need to pick the state that is exactly $2$ steps behind the last updated state in the chain.
For general $h$, consider applying $\ell$-step lookahead in only one state --- the one that is $\ell$ steps behind the last updated state in the chain --- for $\ell=2,\dots,h$ and $1$-step in the others. This guarantees the same number of iterations until convergence as $h$-PI, but with much less computation time.
Namely, while the per-iteration computational cost of $h$-PI is $O \bigl( n \cdot \costh{h} \bigr)$, we can achieve the same convergence rate with just $O \bigl( n \cdot \costh{1} + \sum_{\ell=2}^h \costh{\ell} \bigr).$ 
In practice, when $n$ is large and $\costh{h}$ scales exponentially with $h$, this gap can be immense: $O \bigl( n \cdot 2^h \bigr)$ versus $O \bigl( n + 2^h \bigr)$.

\section{Contraction-Based Adaptive Lookahead}
\label{sec:approach}

In this section, we introduce the concept of dynamically adapting the planning lookahead horizon during runtime, based on the online obtained contraction.
In \cref{exmp: chain mdp}, $h$-PI convergence rate can be achieved when using a lookahead larger than $1$ in just $h$ states.
The question is how to choose these states? In the example, the chosen states are evidently those with the maximal distance between their $1$-step improvement and optimal value, i.e., 
$
    \argmax_{s \in \calS} |V^\star(s) - T[V^{\pi_t}](s)|.
$
In this section, we show that this approach also leads to theoretical guarantees on the convergence of the PI algorithm.

To understand how the convergence rate depends on the distance of the $1$-step improvement from the approximate optimal value, we delve into the theoretical properties of PI. 
Since the standard $1$-step improvement yields a contraction of $\gamma$ while the $h$-step improvement gives $\gamma^h$, $h$-PI converges $h$ times faster than standard PI \cite{efroni2018beyond}. Importantly, this contraction is with respect to the $L_\infty$ norm; i.e., the states with the worst (largest) contraction coefficient determine the convergence rate of PI.
This behavior is the source of weakness of using a fixed lookahead. Example~\ref{exmp: chain mdp} shows that one state may slow down convergence, but it also hints at an elegant solution: {\em use larger lookahead in states with larger contraction coefficient}. 
 
We leverage this observation and present two new algorithms: \algTPI{} which aims to achieve a fixed contraction in all states with a reduced computational cost, and \algQPI{} which aims to achieve maximal contraction in every iteration within a fixed computational budget. While both algorithms seek to optimize a similar problem, their analyses differ and shed light on the problem from different perspectives: \algTPI{} depends on the contraction factor per state, while \algQPI{} considers only the ordering of the states with respect to their contraction factors.

Since we do not have access to the optimal value $V^\star(s)$, the algorithms rely on warm-starts or an approximation of the optimal value $V^\star$, denoted by \appxVstar{}. While obtaining a good approximation of the value function is hard, we aim at a simpler task: find an approximation that is informative for allocating the depth resources. The approximated values may be far from optimal, as long as they yield similar allocation across depths.
In many cases, we can obtain such an approximation through, e.g., state aggregation, training agents on similar tasks, or by running PI for a small number of iterations. 
In Sections~\ref{sec:experiments} and~\ref{sec:atari}, we show that these approximation methods are indeed effective in practice.

\subsection{Threshold-based Lookahead Policy Iteration}
\label{sec:thresh-alg}

\begin{algorithm}[t]
    \caption{\algTPI{}}  
    \label{alg:threshold-PI}
    \begin{algorithmic}[1]
        \STATE \textbf{Input:} $\mathcal{S} , \mathcal{A} , r , P , \gamma ,\kappa, \beta, \wt V^\star$.
        
        \STATE \textbf{Initialization:} 
        Arbitrary $\pi_0$, $t \gets 0$.
        
        \WHILE{$\pi_t$ changes}
        
            \STATE Evaluation: compute $V^{\pi_t}$, and set $U(s,a) \gets \infty.$
        
            \STATE $1$-step improvement: $U(s,a) \gets Q^{\pi_t}_1(s,a) \  \forall (s,a).$
            
            \STATE $\hkappa$-step improvement: $U(s,a) \gets Q^{\pi_t}_{\hkappa}(s,a)$ for every $(s,a)$ s.t.: (here $U(s) = \max_{a} U(s,a)$)
            \begin{align}
                \label{eq:TPI-alg}
                | \wt V^\star(s) - U(s) | > \kappa \lVert \wt V^\star - V^{\pi_t} \rVert_\infty - \beta.
            \end{align}
            
            \STATE   Set $\pi_{t+1}(s) \gets \argmax_{a \in \calA} U(s,a)$ for every $s \in \calS.$
            
        \ENDWHILE
        
    \end{algorithmic}
\end{algorithm}

\algTPI{} (\cref{alg:threshold-PI}) takes as input the approximated value $\wt V^\star$, a desired contraction factor $\kappa$ and a correction term $\beta.$
We assume that $\lVert V^\star - \wt V^\star \rVert_\infty \le \epsilon$.
This implies we can measure contraction up to some approximation error that scales with $\epsilon$.
The algorithm ensures that in each iteration, the value in every state contracts by at least $\kappa$. 
This is achieved by first performing $1$-step improvement in all states and then performing $\hkappa$-improvement in states whose measured contraction is less than $\kappa$, where $\hkappa$ is the smallest integer $h$ such that $\gamma^h \le \kappa$.
Since we do not have an accurate estimate of the optimal value, we use the correction term $\beta$ to make sure that no states falsely seem to achieve the desired contraction due to the approximation error $\epsilon$ (see \cref{eq:TPI-alg}).

The following result states that \algTPI{} converges at least as fast as $h$-PI \cite{efroni2018beyond} with $h$ set to $\hkappa-1,$ and with improved computational complexity. To measure the trade-off between the contraction factor (that determines the convergence rate) and the computational complexity needed to achieve it, \cref{def:contraction-coefficient} presents $\thetakappa$ as the fraction of states in which we perform a large lookahead.

\begin{definition}[Def. of $\thetakappa$]
    \label{def:contraction-coefficient}
    Let $\{\pi_t\}_{t=1}^T$ be the sequence of policies generated by \algTPI{}  with correction term $\beta$ and approximated value \appxVstar{}. Let $\kappa \in (0,1),$ and define
    \begin{equation*}
    {\calX_t = \{ s \in \calS : | \wt V^\star(s) - T[V^{\pi_t}](s) | \leq \kappa \lVert \wt V^\star - V^{\pi_t} \rVert_\infty - \beta \}} 
    \end{equation*}
    as the set of states, that after $1$-step improvement in iteration $t$, are $\beta$-close to be contracted by $\kappa$ with respect to $\wt V^\star$.
    Then, denote by $\thetakappa = \max_{1\leq t \leq T} |\calS \setminus \calX_t| / S$ the largest fraction of states with contraction less than $\kappa,$ observed along all policy updates. 
\end{definition}

\begin{theorem}
\label{thm:threshold-PI}
    The \algTPI{} algorithm with approximated value $\wt V^\star$ and correction term $\beta = \epsilon (\kappa + 1)$ converges in at most $\left\lceil \left( (\hkappa - 1) \log \frac{1}{\gamma}\right)^{-1} S (A - 1) \log \frac{1}{1 - \gamma} \right\rceil$ iterations.
    Moreover, its per-iteration computational complexity is bounded by ${S \cdot \Bigl( \costh{1} + \thetakappa \costh{\hkappa} \Bigr)}$.
\end{theorem}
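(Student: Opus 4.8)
The plan is to prove the two claims separately: the per-iteration complexity bound by a direct counting argument, and the iteration-count bound by first establishing a per-iteration contraction of factor $\kappa$ and then feeding it into the policy-counting analysis that already underlies the $h$-PI bound \citep{scherrer2016improved,efroni2018beyond}. Throughout I would use the identity $\max_{a}Q^\pi_h(s,a)=T^h[V^\pi](s)$, so the greedy value produced by an $h$-step improvement is exactly $T^h[V^\pi]$, together with the fact that $T$ is a $\gamma$-contraction toward $V^\star$, hence $\lVert T^h[V]-V^\star\rVert_\infty\le\gamma^h\lVert V-V^\star\rVert_\infty$ holds pointwise. By definition $\hkappa$ is the smallest integer with $\gamma^{\hkappa}\le\kappa$, so $\gamma^{\hkappa}\le\kappa<\gamma^{\hkappa-1}$.

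For the complexity bound I would simply tally the work of one iteration. Every state undergoes a $1$-step improvement, costing $S\cdot\costh{1}$. The additional $\hkappa$-step improvement is performed only in the states failing the test in \cref{eq:TPI-alg}; since after the $1$-step pass $\max_a U(s,a)=T[V^{\pi_t}](s)$, these are exactly the states of $S\setminus\calX_t$. By \cref{def:contraction-coefficient} their number is at most $\thetakappa S$, so the extra cost is at most $\thetakappa S\cdot\costh{\hkappa}$, and summing gives $S(\costh{1}+\thetakappa\costh{\hkappa})$.

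For the iteration bound, let $W(s)=\max_a U(s,a)$ be the greedy value after the full improvement step. I would first show $\lVert V^\star-W\rVert_\infty\le\kappa\lVert V^\star-V^{\pi_t}\rVert_\infty$: on $\calX_t$ this is the defining inequality of $\calX_t$, while on $S\setminus\calX_t$ we have $W(s)=T^{\hkappa}[V^{\pi_t}](s)$, so the $\gamma^{\hkappa}$-contraction together with $\gamma^{\hkappa}\le\kappa$ gives the bound there too. Combined with the elementary bounds $W\le V^\star$ and $V^{\pi_{t+1}}\le V^\star$, it then suffices to prove the policy-improvement domination $V^{\pi_{t+1}}\ge W$, since this yields $0\le V^\star-V^{\pi_{t+1}}\le V^\star-W$ and hence the per-iteration contraction $\lVert V^\star-V^{\pi_{t+1}}\rVert_\infty\le\kappa\lVert V^\star-V^{\pi_t}\rVert_\infty$. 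Because $\kappa<\gamma^{\hkappa-1}$, this contraction is at least as strong as that of $(\hkappa-1)$-PI, so substituting into the counting bound, with $\log\frac1\gamma$ replaced by $\log\frac1\kappa\ge(\hkappa-1)\log\frac1\gamma$, produces the stated iteration count.

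The hard part is the domination $V^{\pi_{t+1}}\ge W$ in this mixed-depth setting. For a uniform $h$-step improvement it follows from the telescoping identity $W=T^{\pi_{t+1}}\!\big[T^{h-1}[V^{\pi_t}]\big]$ together with $T^h[V^{\pi_t}]\ge T^{h-1}[V^{\pi_t}]$, which gives $T^{\pi_{t+1}}[W]\ge W$ and then, by monotone iteration of $T^{\pi_{t+1}}$ to its fixed point, $V^{\pi_{t+1}}\ge W$. In \algTPI{} this argument breaks precisely at a state $s\in S\setminus\calX_t$ whose $\hkappa$-step plan passes through a state $s'\in\calX_t$: there $W(s')=T[V^{\pi_t}](s')\le T^{\hkappa-1}[V^{\pi_t}](s')$, so the termwise inequality needed for $T^{\pi_{t+1}}[W]\ge W$ can fail. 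I would resolve this by a more careful state-wise monotonicity argument, iterating $T^{\pi_{t+1}}$ from a suitable lower envelope such as $V^{\pi_t}$ and tracking that the shallow-lookahead coordinates, which are genuinely $1$-step greedy, never drag the deep-lookahead coordinates below their committed value, so that the limit still dominates $W$. Establishing this domination cleanly, rather than the $L_\infty$ contraction itself, is where I expect the real work to lie.
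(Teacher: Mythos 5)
Your proposal reproduces the paper's architecture faithfully: the per-iteration cost bound is proved by exactly the same counting argument (every state gets a $1$-step improvement; by \cref{def:contraction-coefficient} at most $\thetakappa S$ states fail the test in \cref{eq:TPI-alg} and receive the $\hkappa$-step improvement), and the iteration bound is obtained, as in the paper's appendix, by establishing the per-iteration contraction $\lVert V^\star - V^{\pi_{t+1}} \rVert_\infty \le \kappa \lVert V^\star - V^{\pi_t} \rVert_\infty$ toward the fixed point $V^\star$ and feeding $\log\frac{1}{\kappa} \ge (\hkappa - 1)\log\frac{1}{\gamma}$ into the action-elimination count of \citet{scherrer2016improved}. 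Those parts are correct and essentially identical to the paper.

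The divergence is at the step you isolate as the hard part, and here you should know that the paper does \emph{not} do the work you anticipate: in its two-case telescoping of $V^\star(s) - V^{\pi_{t+1}}(s)$ it discards the terms $\gamma P^{\pi_{t+1}}(V^{\pi_t} - V^{\pi_{t+1}})$ and $\gamma P^{\pi_{t+1}}(T^{\hkappa-1}[V^{\pi_t}] - V^{\pi_{t+1}})$ ``by monotonicity of PI,'' i.e.\ it simply asserts $V^{\pi_{t+1}} \ge V^{\pi_t}$ and the stronger $V^{\pi_{t+1}} \ge T^{\hkappa-1}[V^{\pi_t}]$ --- precisely the mixed-depth domination you refuse to take for granted. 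Your suspicion is well founded: for an \emph{arbitrary} depth assignment the property is false. Take $\gamma = 0.9$, states $s, x, y$ and a zero-value sink; at $s$: one action gives reward $0.792$ and terminates, the other gives $0$ and moves to $x$; at $x$: one action gives $0.833$ and terminates, the other gives $0$ and moves to $y$; at $y$: one action gives $1$, the other $0$, both terminating. Under $\pi$ that takes the terminating rewards $0.792$, $0.833$, $0$, assign depth $3$ to $s$ and depth $1$ to $x,y$: the $3$-step greedy action at $s$ moves to $x$ (certificate $\gamma^2 \cdot 1 = 0.81 > 0.792$), but the $1$-step greedy at $x$ terminates, so $V^{\pi_{t+1}}(s) = \gamma \cdot 0.833 \approx 0.75 < 0.792 = V^{\pi_t}(s)$, and even the first inequality $T^{\pi_{t+1}}[V^{\pi_t}] \ge V^{\pi_t}$ fails at $s$. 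This also kills your proposed repair of iterating $T^{\pi_{t+1}}$ from the lower envelope $V^{\pi_t}$. Any complete proof must exploit the structure that \algTPI{} assigns the deep lookahead only to states \emph{not} yet $\kappa$-contracted after one step (in the example above, $s$ is already contracted, so \algTPI{} would never make it deep); neither your sketch nor the paper's one-line appeal to monotonicity does this. So your proposal correctly locates the crux but leaves it unproven --- and, as written, so does the paper.
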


\begin{proof}[Proof sketch]
    The proof to bound the number of iterations follows \citet{scherrer2016improved} while utilizing two key observations.
    First, the convergence analysis of PI only uses the contraction property of the Bellman operator w.r.t. $V^\star$, and not w.r.t. an arbitrary pivot vector.
    The distance to $V^\star$ can be approximated using \appxVstar{}, and the approximation error is handled by the correction term $\beta$.
    Second, by the construction of the algorithm, a contraction of at least $\kappa$ in every state is guaranteed.
    The computational complexity follows because we perform $1$-step lookahead in all states and $\hkappa$-step lookahead in $\thetakappa$ of the states by \cref{def:contraction-coefficient}. For the complete proof, see Appendix~\ref{sec: TPI full proof}.
\end{proof}

To illustrate the merits of \algTPI{} and Thoerem~\ref{thm:threshold-PI}, consider the Chain MDP in \cref{exmp: chain mdp} where we set $\kappa=\gamma^h$ for some $h \in \mathbb{N}$ (and assume $\wt V^\star = V^\star$ for simplicity). In every iteration, the states not contracted by $\kappa$ after $1$-step improvement are the $h$ states closest to the end of the chain that have not been updated yet (recall that each state reaches the correct optimal value after just one non-idle update).
Thus, $\thetakappa = h/S$ and the per-iteration computational complexity is $S \cdot \costh{1} + h \cdot \costh{h}$.

\subsection{Quantile-based Lookahead Policy Iteration}

\begin{algorithm}[t]
    \caption{\algQPI{}}  
    \label{alg:quantiles-PI}
    \begin{algorithmic}[1]
        \STATE \textbf{Input:} $\mathcal{S} , \mathcal{A} , r , P , \gamma , (\theta_1 \dots \theta_{H}), m, \wt V^\star$.
        
        \STATE \textbf{Initialization:} 
        Arbitrary $\pi_0$, $t \gets 0$.
        
        \WHILE{$\pi_t$ changes}
        
            \STATE Evaluation: compute $V^{\pi_t}$, and set $U(s,a) \gets \infty.$
            
            \FOR{$h=1,2,\dots,H$}
            
                \STATE Compute $q_h$ as the $(1 - \theta_{h} - m/S)$ quantile of ${\{ | \wt V^\star(s) - \max_a U(s,a) | \}_{s \in \calS}}$.
            
                \STATE $h$-step improvement: $U(s,a) \gets Q^{\pi_t}_h(s,a)$ for every $(s,a)$ s.t.:
                $
                    | V^\star(s) - \max_{a \in \calA} U(s,a) | \ge q_h.
                $
                
            \ENDFOR
            
        \STATE Set $\pi_{t+1}(s) \gets \argmax_{a \in \calA} U(s,a)$ for every $s \in \calS.$
            
        \ENDWHILE
    \end{algorithmic}
\end{algorithm}

\algQPI{} (\cref{alg:quantiles-PI}) resembles \algTPI, but instead of a contraction coefficient $\kappa,$ it takes as input a vector of quantiles (budgets)  $(\theta_1,\dots,\theta_H) \in [0,1]^H$ for some predetermined maximal considered lookahead $H$. 
Instead of the actual distance to the optimal value, \algQPI{} relies only on the ordering of the states in terms of distance from the optimum.
This allows for weaker requirements on the approximated value \appxVstar{} as it should only preserve the order.

\begin{definition}
    Let $p_s$ and $\tilde p_s$ be the positions of state $s$ in the orderings of $\{ | V^\star(s) - V^{\pi_t}(s)| \}_{s \in \calS}$ and $\{ |\wt V^\star(s) - V^{\pi_t}(s)| \}_{s \in \calS}$, respectively.
    We define the approximation \appxVstar{} to be $m$\emph{-order-preserving} if, for every $s \in \calS$, $|p_s - \tilde p_s| \le m$.
\end{definition}

\emph{State-aggregation} is an example of an approximation that preserves orders and that is available in many domains where states are based on locality (like the maze environment considered in \cref{sec:experiments}).
Assume that we have access to a state-aggregation scheme that splits the state space into $S/m$ groups of size $m$ such that for every two states $s_1,s_2$ in the same group $|V^\star(s_1) - V^\star(s_2)| \le \epsilon$ and for any state $s_3$ from a different group $|V^\star(s_1) - V^\star(s_3)| > 2 \epsilon$.
Then the optimal value of the aggregated MDP $V^\star_{agg}$ is $m$-order-preserving as long as $| V^\star_{agg}(s) - V^\star(s)| \le \epsilon$ for every $s \in \calS$, since the position of a state can be shifted due to the aggregation by at most the size of its group $m$.

\algQPI{} attempts to maximize the contraction in every iteration while using $\ell$-step lookahead in at most $\theta_{\ell} \cdot S + m$ states. This is achieved by performing $\ell$-step improvement on the $(\theta_{\ell} + m/S)$ portion of states that are furthest away from \appxVstar{}.

The following result is complementary to Theorem~\ref{thm:threshold-PI}: now, instead of choosing the desired iteration complexity (via $\kappa$ in \algTPI), we choose the desired computational complexity per iteration via budgets $(\theta_1,\dots,\theta_H).$ For the resulting iteration complexity we define the induced contraction factor:
\begin{definition} [Def. of $\kappatheta$]
    \label{def:contraction-coefficient-quantiles}
    Let $\{\pi_t\}_{t=1}^T$ be the sequence of policies generated by \algQPI{}.
    Let $h_t^\theta(s)$ be the largest lookahead applied in state $s$ in iteration $t$ when running \algQPI{} with quantiles $(\theta_1,\dots,\theta_H)$.
    For a given $\kappa$, define
    \begin{equation*}
    {\calY_{t} (\kappa) = \{ s: | V^\star(s) - T^{h_t^\theta(s)}[V^{\pi_t}](s) | \leq \kappa \lVert V^\star - V^{\pi_t} \rVert_\infty \}}    
    \end{equation*}
    as the set of states contracted by  $\kappa$ in iteration $t$.
    The induced contraction factor $\kappatheta$ is defined as the minimal $\kappa$ such that $\calY_t(\kappa) = \calS$ for every  $t$.
\end{definition}

Though its formal definition may seem complex, $\kappatheta$ is simply the effective contraction obtained by \algQPI{}.

\begin{theorem}
    \label{thm:quantiles-PI}
    The \algQPI{} algorithm converges in at most $\left\lceil (\log \frac{1}{\kappatheta})^{-1} S (A - 1) \log \frac{1}{1 - \gamma} \right\rceil$ iterations.
    Moreover, its per-iteration computational complexity is bounded by ${S \cdot \sum_{h=1}^{H} (\theta_{h} + m/S) \costh{h}}$.
\end{theorem}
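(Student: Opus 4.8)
The plan is to prove the two claims of \cref{thm:quantiles-PI} separately, mirroring the structure of the proof of \cref{thm:threshold-PI}. The iteration bound should follow the same template as \citet{scherrer2016improved}, relying only on the contraction property of the relevant lookahead operators with respect to the true pivot $V^\star$. The computational complexity claim is a direct counting argument from the quantile construction. I expect the iteration bound to be the main obstacle, since the contraction here is heterogeneous across states and is captured only implicitly through \cref{def:contraction-coefficient-quantiles}.

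For the iteration complexity, first I would observe that by the very \emph{definition} of $\kappatheta$ (\cref{def:contraction-coefficient-quantiles}), running \algQPI{} with budgets $(\theta_1,\dots,\theta_H)$ guarantees that in every iteration $t$ and every state $s$ we have $|V^\star(s) - T^{h_t^\theta(s)}[V^{\pi_t}](s)| \le \kappatheta \lVert V^\star - V^{\pi_t}\rVert_\infty$, because $\kappatheta$ is chosen precisely so that $\calY_t(\kappatheta) = \calS$ for all $t$. Since $U(s,\cdot)$ after the final $h$-loop equals $Q^{\pi_t}_{h_t^\theta(s)}(s,\cdot)$ and $\max_a U(s,a) = T^{h_t^\theta(s)}[V^{\pi_t}](s)$, this says that the per-state improved value contracts toward $V^\star$ by a factor of at least $\kappatheta$ in the $L_\infty$ norm. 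I would then reuse the PI convergence argument verbatim, substituting the single contraction factor $\gamma$ (or $\gamma^{h-1}$ in the fixed-$h$ case) by $\kappatheta$: the key point, exactly as noted in the sketch for \cref{thm:threshold-PI}, is that the standard analysis only invokes contraction toward the fixed pivot $V^\star$, not toward an arbitrary vector, so a uniform per-state guarantee $\lVert V^\star - V^{\pi_{t+1}}\rVert_\infty \le \kappatheta \lVert V^\star - V^{\pi_t}\rVert_\infty$ is all that is needed. Plugging $\kappatheta$ in place of $\gamma$ into the bound ${\lceil (\log \tfrac{1}{\gamma})^{-1} S(A-1)\log\tfrac{1}{1-\gamma}\rceil}$ yields the stated ${\lceil (\log \tfrac{1}{\kappatheta})^{-1} S(A-1)\log\tfrac{1}{1-\gamma}\rceil}$.

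For the per-iteration computational complexity, I would argue directly from the quantile thresholds. In iteration $t$, for each $h$ the algorithm applies an $h$-step improvement only to those states whose current discrepancy $|V^\star(s) - \max_a U(s,a)|$ exceeds the $(1-\theta_h)$ quantile $q_h$; by definition of that quantile, at most $\theta_h S$ states satisfy this, so the number of $h$-step improvements is bounded by $\theta_h S$. Each such improvement costs $\costh{h}$, and summing over $h=1,\dots,H$ gives the claimed bound $S \cdot \sum_{h=1}^H \theta_h \costh{h}$. The only subtlety I would flag here is that a state may be touched at several values of $h$ within the inner loop, so one must verify that charging each $h$-level its own $\theta_h S \cdot \costh{h}$ upper-bounds the true total cost; this holds because the costs are additive over the loop iterations and the quantile membership at each level is independently capped by $\theta_h S$.

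The main obstacle, as I see it, is making the iteration bound rigorous despite the implicit, iteration-dependent nature of $\kappatheta$. Unlike the fixed-horizon case where the contraction factor $\gamma^h$ is known a priori, here $\kappatheta$ is defined retrospectively as the worst-case contraction over the actual run, and one must be careful that this definition is well-posed (i.e., that a finite minimal $\kappa < 1$ with $\calY_t(\kappa)=\calS$ exists) and that it legitimately feeds into the Scherrer-style recursion. I would address this by noting that each per-state lookahead operator $T^{h}$ is itself a $\gamma^{h}$-contraction toward $V^\star$, so every state enjoys \emph{some} contraction strictly below $1$; taking the worst such factor over the finitely many iterations $t=1,\dots,T$ and states $s$ gives a well-defined $\kappatheta \in (0,1)$, after which the uniform $L_\infty$ bound derived above closes the argument. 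The full details would be deferred to an appendix, as was done for \cref{thm:threshold-PI}.
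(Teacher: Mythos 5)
Your proposal is correct and follows essentially the same route as the paper's proof: the iteration bound comes from observing that, by the very definition of $\kappatheta$ (\cref{def:contraction-coefficient-quantiles}), the operator induced by \algQPI{} contracts every state toward the fixed pivot $V^\star$ by at least $\kappatheta$, after which the Scherrer-style argument from the proof of \cref{thm:threshold-PI} applies verbatim with $\kappatheta$ in place of $\kappa$; the complexity bound is the same per-level counting argument (at most $\theta_h S$ states receive an $h$-step improvement, each at cost $\costh{h}$). Your added remarks on the well-posedness of $\kappatheta$ and on charging each quantile level separately are sound refinements that the paper's (much terser) proof leaves implicit.
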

We provide the proof in Appendix~\ref{sec: QPI full proof}; it is based on similar ideas as the proof of Theorem~\ref{thm:threshold-PI}.

To illustrate the merits of \algQPI{} and Theorem~\ref{thm:quantiles-PI}, consider the Chain MDP in \cref{exmp: chain mdp} where we set $\theta_1=1$ and $\theta_2=\dots=\theta_h=1/S$ for some $h \in \mathbb{N}$ (again $\wt V^\star = V^\star$ for simplicity). In every iteration \algQPI{} first performs $1$-step lookahead in all states, and then, for each $\ell=2,\dots,h$, it performs $\ell$-step lookahead in exactly one state -- the state that is $\ell$ steps behind the last updated state in the chain.
As explained in \cref{sec:motivation}, the induced contraction is $\kappatheta = \gamma^h$ and \algQPI{} converges in $n/h$ iterations with \emph{optimal} per-iteration complexity of $S \cdot \costh{1} + \sum_{\ell=2}^h \costh{\ell}$.

Finally, we highlight the complementary nature of the two algorithms: while in \algTPI{} the complexity parameter $\thetakappa$ is governed by the desired contraction coefficient, in \algQPI{} the induced contraction $\kappatheta$ is the outcome of the pre-determined computational budget.

\begin{figure}[t]
    \centering 
    \includegraphics[width=120pt,height=120pt]{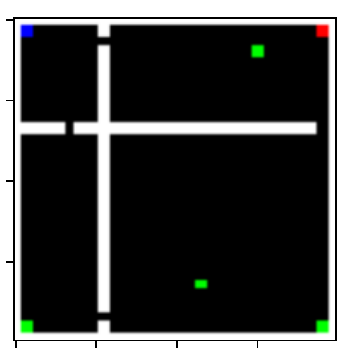}
    \caption{Snapshot of our maze environment, a $30 \times 30$ grid world. The agent is spawned in the top left corner (blue) and needs to reach one of four randomly chosen goals (green), while avoiding the trap (red). White pixels denote walls. Upon reaching a goal state, the agent re-appears in a new random location. }
  \label{fig:maze}
\end{figure}

\section{Maze Experiments}
\label{sec:experiments}

In the first set of experiments, we evaluate our adaptive lookahead algorithms, \algTPI{} and \algQPI{}, on a grid world with walls \citep{tennenholtz2022covariate}.
Specifically, we used a $30 \times 30$ grid world that is divided to four rooms with doors between them; see \cref{fig:maze}.
The agent is spawned in the top left corner (blue) and needs to reach one of four randomly chosen goals (green) where the reward is $1$, while avoiding the trap (red) that incurs a reward of $-1$.
There are four deterministic actions (up, down, right, left). Upon reaching a goal, the agent is moved to a random state.
We set $\gamma = 0.98.$

\paragraph{Fixed lookahead.}
We begin with testing the fixed-horizon $h$-PI with values $h=1,2,\dots,7$. To corroborate that larger lookahead values reduce the number of PI iterations required for convergence, in \cref{fig:fixed-lookahead-num-iterations-paper}, we show the distance from the solution as the function of iteration for the different depths. The plot demonstrates the effect of the lookahead in a less pathological example than \cref{exmp: chain mdp}. 

In \cref{fig:num-queries}, we compare the overall computational complexity, and not only the number of iterations, of the different fixed lookahead values. To measure performance, we count the number of \emph{queries to the simulator} (environment) until convergence to the optimal value. More efficient lookahead horizons will require fewer overall calls to the simulator.

Beginning with the fixed lookahead results in the leftmost plot, we see the trade-off when picking the lookahead. For a lookahead too short (1 in this case), the convergence requires too many iterations such that even the low computational complexity of each iteration is not sufficient to compensate for the total compute time. Note that $h$-PI with $h=1$ is the standard PI algorithm, which evidently performs worse than the best-fixed lookahead although it is overwhelmingly the most widely used version of PI.
On the other extreme of a very large lookahead, each iteration is too computationally expensive, despite the smaller number of iterations.

\begin{figure}[t]
    \centering 
    \includegraphics[width=170pt,height=120pt]{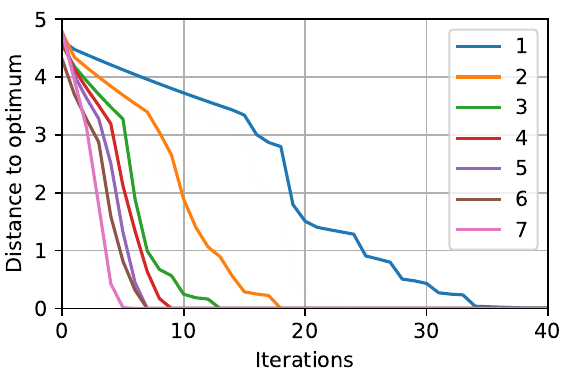}
    \caption{The distance to the optimum,  captured by $\lVert V^\star - V^{\pi_t} \rVert_\infty,$ as a function of the iteration number $t$ for fixed lookahead values of $h=1,2,\dots,7$ in maze environment.}
  \label{fig:fixed-lookahead-num-iterations-paper}
\end{figure}

\begin{figure*}[t]
    \centering 
    \includegraphics[width=\textwidth,height=120pt]{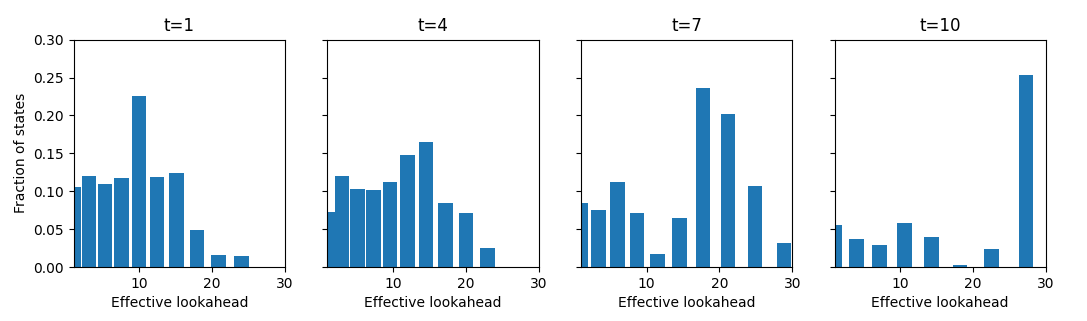}
    \caption{Histograms for the fraction of states per effective lookahead along several iterations of PI. The effective lookahead of contraction factor $\kappa$ is defined as $h = \log_{\gamma} (\kappa)$, i.e., $\gamma^h = \kappa$.}
  \label{fig:hist-alternative}
\end{figure*}

\begin{figure*}[t]
    \centering 
    \includegraphics[width=\textwidth,height=130pt]{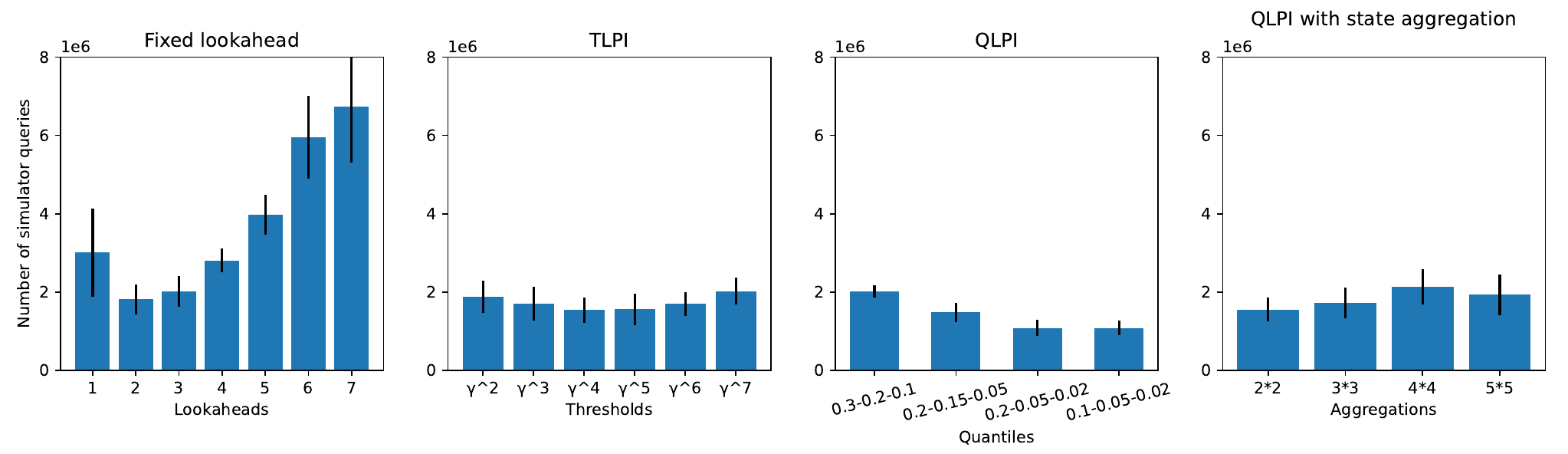}
    \caption{Number of queries to the simulator until convergence. Lower is better. 
    The results are averaged across $10$ runs and the error bars represented standard deviation. \algQPI{} is run with lookaheads $1$, $2$, $4$, and $8$, where the quantiles in the x-axis represent $\theta_2,\theta_4,\theta_8$.}
  \label{fig:num-queries}
\end{figure*}

\begin{figure*}[t]
    \centering 
    \includegraphics[width=\textwidth]{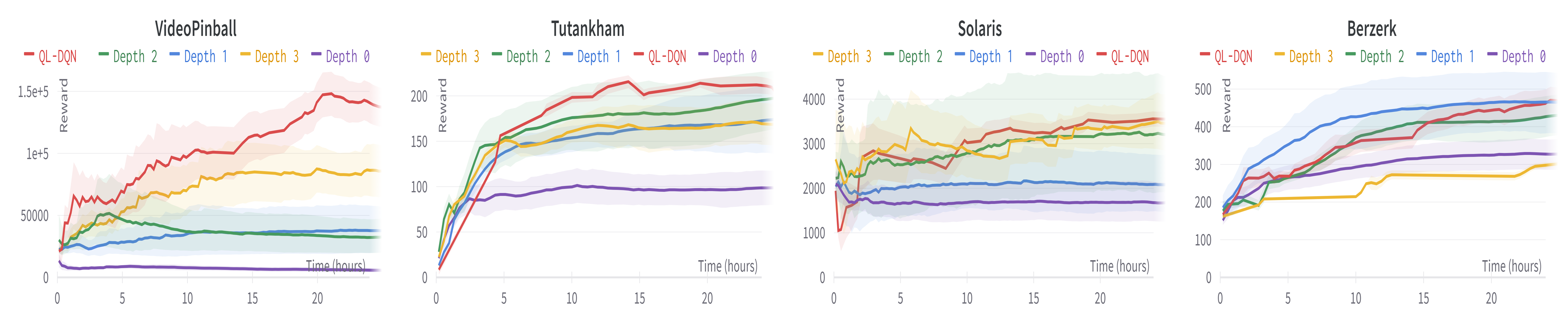}
    \caption{Average and std of training reward of \algQDQN{} (in red) and DQN with fixed tree-depths $0,1,2,3$ in various Atari environments. Note the x-axis is time due to the increasing computational complexity of higher depths (a plot with step-based x-axis is given in Appendix \ref{supp:plots}).}
  \label{fig:atari-experiments}
\end{figure*}

\paragraph{\algTPI{}.}
To verify our observation that a long lookahead is wasteful in large parts of the state space, we first plot a histogram of the contraction factor along several PI iterations in \cref{fig:hist-alternative}. Here we see that indeed the effective contraction factor $\kappa$ is much smaller than $\gamma$ (i.e., more contractive than $1$-step lookahead) in roughly 90\% of the states. 

Next, we run \algTPI{} with  $\kappa = \gamma^2,\gamma^3,\dots,\gamma^7$ and an accurate approximated value $\wt V^\star = V^\star$. The results are given in \cref{fig:num-queries}, second plot. By \cref{thm:threshold-PI}, when setting $\kappa = \gamma^h$, we expect the same number of iterations until convergence as $h$-PI but with better computational complexity. In fact, the results reveal even stronger behavior: \algTPI{}($\gamma^h$) for all $h=1,2,\dots,7$ achieves similar computational complexity compared to the \emph{best} fixed lookahead witnessed in $h$-PI.

\paragraph{\algQPI{}.}
In all our experiments we run \algQPI{} with $\theta_1 = 1$ and $\theta_3=\theta_5=\theta_6=\theta_7=0$ (again $\wt V^\star = V^\star$).
For $(\theta_2,\theta_4,\theta_8)$ we set the following values: $(0.3,0.2,0.1)$, $(0.2,0.15,0.05)$, $(0.2,0.05,0.02)$ and $(0.1,0.05,0.02),$ which respectively depict decreasing weights to depths $2, 4, 8.$
The results are presented in \cref{fig:num-queries}, third plot.
Again we can see that for all the parameters, \algQPI{} performs as well as the best-fixed lookahead.
Moreover, notice that for some of the choices of $\theta$ vectors, the performance significantly improves upon the best-fixed horizon.

\paragraph{Approximate $V^\star$ via state aggregation.}
We again run \algQPI{} with budget values $(0.1,0.05,0.02)$, but replace  $V^\star$ with an approximation we obtain with state aggregation.
Namely, we merge squares of $k \times k$ into a single state, solve the smaller aggregated MDP, and use its optimal value as an approximation for $V^\star$. 

We perform this experiment with $k=2,3,4,5$ and include the aggregated MDP solution process in the total simulator query count. This way, our final algorithm does not have any prior knowledge of $V^\star.$ The results are presented in \cref{fig:num-queries}, last plot. As expected, the performance is slightly worse than the original \algQPI{} that uses the accurate $V^\star$, but for all different aggregation choices the algorithm still performs as well as the best-fixed lookahead in $h$-PI. 

To summarize, the maze experiments show that with adaptive planning lookahead, we manage to reach the solution with better sample complexity (i.e. number of simulator queries) compared to fixed-horizon $h$-PI. More importantly, our methods are robust to hyperparameter choices: the improved results are obtained uniformly with \emph{all} various tested parameters of \algTPI{} and \algQPI{}. This alleviates the heavy burden of finding the best-fixed horizon for a given environment.

\section{QL-DQN and Atari Experiments}
\label{sec:atari}

\begin{figure}[t]
    \centering
\includegraphics[width=0.48\textwidth,height=168pt]{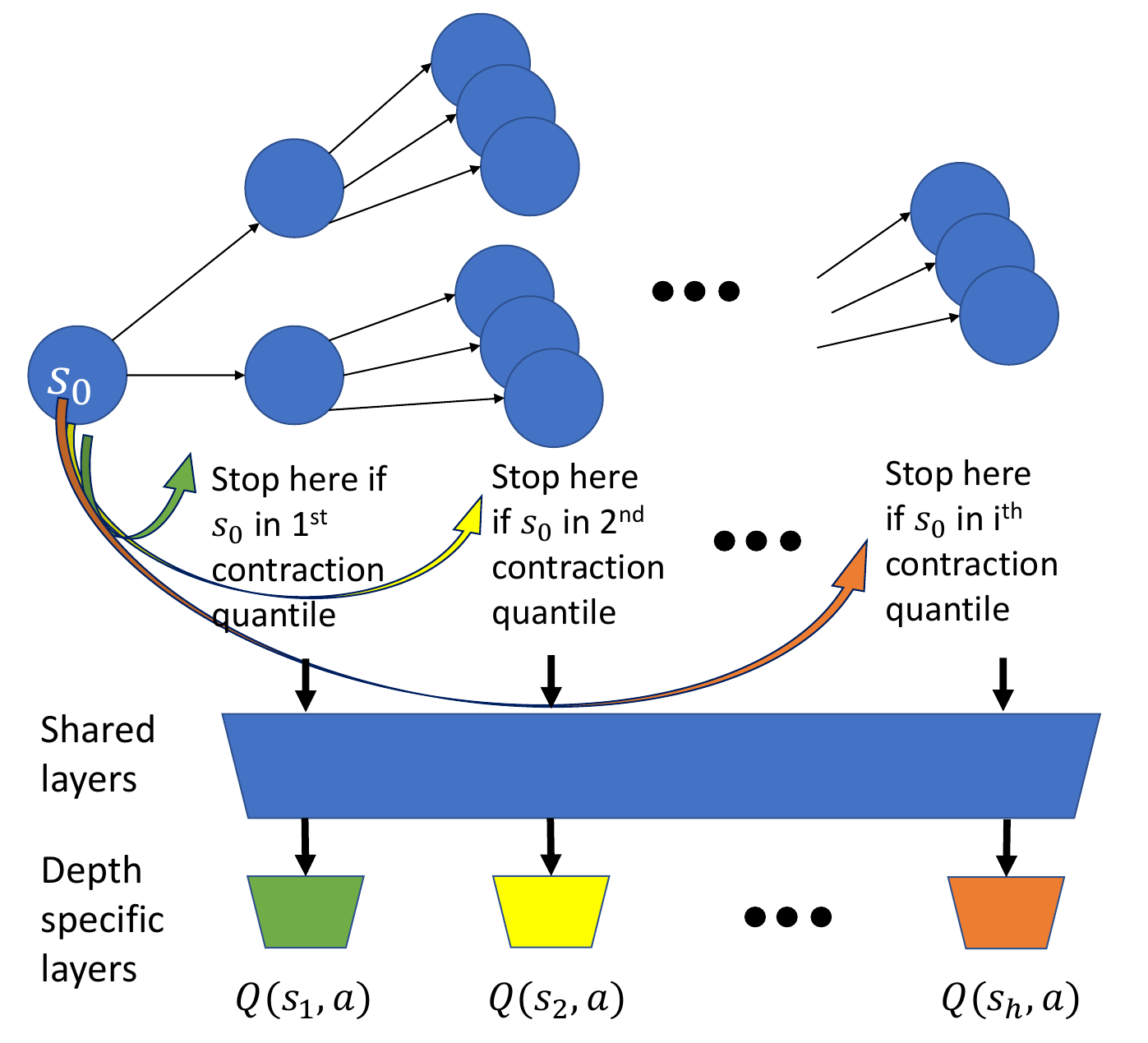}
    \caption{The \algQDQN{} algorithm. When choosing actions, the policy uses a tree depth based on the ranking of $s_0$'s contraction coefficient in the replay buffer. The per-depth $Q$ network has a shared basis and depth-specific heads.}  \label{fig:QL-DQNalg}
\end{figure}

In this section, we extend our adaptive lookahead algorithm \algQPI{} to neural network function approximation. We present Quantile-based lookahead DQN (\algQDQN{}): the first DQN algorithm that uses state-dependent lookahead that is dynamically chosen throughout the learning process. \algQDQN{} is visualized in Figure~\ref{fig:QL-DQNalg} and operates as follows:
\begin{enumerate}
    \item We introduce a \emph{per-depth Q-function}. Technically, we maintain $H$ parallel $Q$-networks (where $H$ is the maximal tree depth) and use the $h$-th network when predicting the value of a leaf in depth $h$ in the tree. To improve generalization and data re-use, the $H$ networks share the initial layers (feature extractors).

\item Maintain a sorted replay buffer according to the distance between the current value estimate and approximated optimal value (for efficiency we utilize a priority queue). For $\tilde{V}^\star,$ we train a standard DQN (depth $0$) agent for only 1M steps -- a relatively quick process.

\item For a replay buffer of size $N$, maintain $H$ quantiles based on the above ordering. The quantile sizes are $\theta_1 \cdot N, \dots, \theta_H \cdot N$. The values $\theta_i$ are hyper-parameters.

\item Choosing tree depth: Per state during simulation, start by spanning the 1st level of the tree. If the best leaf value distance from the approximate optimal value is in the 1st quantile, end the tree-search. Otherwise, go one level deeper, compare to the 2nd quantile, and re-iterate with the same logic. Continue possibly until the max depth $H$ is reached. Notice that the tree-search is feasible in reasonable run-time thanks to highly efficient parallel Atari simulation on GPU \citep{hallak2021improve}.

\item Choosing action given depth: After spanning the tree as described above, return the first action (from the root) that corresponds to the highest leaf value.

\item Store (state, action, cumulative reward to leaf, leaf state, tree-depth) to the replay buffer.

\item For training, set the bootstrap target to be the cumulative reward from the tree-search plus the Q-value corresponding tree-depth calculated on the leaf-state.
\end{enumerate}

We note that the per-depth Q-function is crucial in order to keep online consistency and achieve convergence. 
We found that in practice, the Q-function learned by DQN is not the true cumulative reward of rollouts. Instead, it is some function that minimizes the Bellman error. This phenomenon is orthogonal to our work and was also recently studied in \cite{fujimoto2022should}. In our context of multiple-depth Q-network, if we bootstrap using the target from one depth for another, the above phenomenon causes inconsistencies that lead to divergence. To handle this inconsistency, we introduced the multi-head Q-network for multiple depths and found that it solves the issue.
All other parts of the algorithm and hyper-parameter choices are taken as-is from the original DQN paper \cite{mnih2013playing}. 

We train QL-DQN on several Atari environments \citep{bellemare2013arcade}. Since our work aims to improve sample complexity over fixed-horizon baselines, our metric of interest here is the reward as a function of training time. Hence, in \cref{fig:atari-experiments} we present the convergence of \algQDQN{} versus DQN with fixed depths $0$ through $3$, as a function of time. The plots consist of the average score across $5$ seeds together with std values. Note that depth $0$ corresponds to standard DQN (the baseline). As seen, QL-DQN achieves better performance on VideoPinball and Tutankham, while on Solaris and Berzerk, it is on par with the best-fixed lookahead. 

The conclusion here is again that we obtain a better or similarly-performing agent to a pre-determined fixed planning horizon. This comes with the benefit of robustness to the expensive hyper-parameter choice of the best-fixed horizon per a given environment.

\section{Discussion}

In this paper we propose the first planning and learning algorithms that dynamically adapt the multi-step lookahead horizon as a function of the state and the current value function estimate.
We demonstrate the significant potential of adaptive lookahead both theoretically --- proving convergence with improved computational complexity, and empirically --- demonstrating their favorable performance in a maze and Atari. Our algorithms often perform as well as the best-fixed horizon in hindsight in almost all the experiments, while in some cases they surpass it. Future work warrants an investigation whether the best-fixed horizon can always be outperformed by an adaptive horizon.

Theoretically, our guarantees rely on prior knowledge of an approximate optimal value, raising the question whether one can choose lookahead horizons adaptively without any prior knowledge, e.g., using transfer learning based on similarity between domains.
Moreover, when the forward model performing the lookahead is inaccurate or learned from data, the adaptive state-dependent lookahead itself may serve as a quantifier for the level of trust in the value function estimate (short lookahead) versus the model (long lookahead). This can offer a way for state-wise regularization of the learning or planning problem.
Our work is also related to the growing Sim2Real literature. In particular, when having several simulators with different computational costs and fidelity levels. The lookahead problem then translates to choosing in which states to use which simulator with which lookahead.

Our focus in this paper was reducing iteration and overall complexity; we thus ignored more intricate details of the forward search itself. Additional practical aspects such as CPU-GPU planning efficiency trade-offs \cite{hallak2021improve} can also affect the lookahead selection problem. One promising direction is to expand at each step only the few most promising nodes, and keep the search width fixed after a certain value. This gives linear complexity in the search depth instead of exponential, at the risk of missing relevant paths.

\bibliography{refs}

\newpage
\onecolumn
\appendix

\section{Proofs}

\subsection{Proof of Theorem~\ref{thm:threshold-PI}}
\label{sec: TPI full proof}
\paragraph{Bounding the number of iterations to convergence.}
Let $B_t:\bbR^S \to \bbR^S$ be the operator induced by the algorithm, i.e., for state $s$ it is $T$ if $|\wt V^\star(s) - T[V^{\pi_t}](s)| \le \kappa \lVert \wt V^\star - V^{\pi_t} \rVert_\infty - \beta$, and otherwise it is  $T^{\hkappa}$. 
The \algTPI{} algorithm ensure that, for every iteration $t$ and state $s \in \calS$,
\[
    | B_t[V^{\pi_t}](s) - V^\star(s) |
    \le
    \kappa \lVert V^\star - V^{\pi_t} \rVert_\infty,
\]
and therefore: $\lVert B_t[V^{\pi_t}] - V^\star \rVert_\infty \le \kappa \lVert V^\star - V^{\pi_t} \rVert_\infty$.
We now show that the sequence $\{ \lVert V^\star - V^{\pi_t} \rVert_\infty \}_{t \ge 0}$ is contracting with coefficient $\kappa$.
To show this, we split the states into two groups:
\begin{enumerate}
    \item $| T[V^{\pi_t}](s) - \wt V^\star(s) | \le \kappa \lVert \wt V^\star - V^{\pi_t} \rVert_\infty - \beta$.
    Thus, \algTPI{} uses $1$-step lookahead for $s$, which implies:
    \begin{align*}
        V^\star(s) - V^{\pi_{t+1}}(s)
        & =
        V^\star(s) - \wt V^\star(s) + \wt V^\star(s) -   T[V^{\pi_{t}}](s) 
        \\
        & \quad 
        + T[V^{\pi_{t}}](s) - T^{\pi_{t+1}}[V^{\pi_{t}}](s) + T^{\pi_{t+1}}[V^{\pi_{t}}](s) - T^{\pi_{t+1}}[V^{\pi_{t+1}}](s)
        \\
        & \le
        V^\star(s) - \wt V^\star(s) + \wt V^\star(s) -   T[V^{\pi_{t}}](s)
        \\
        & \quad + T^{\pi_{t+1}}[V^{\pi_{t}}](s) - T^{\pi_{t+1}}[V^{\pi_{t+1}}](s)
        \\
        & \le
        \epsilon + \kappa \lVert \wt V^\star - V^{\pi_t} \rVert_\infty  - \beta + \gamma P^{\pi_{t+1}} (V^{\pi_t} - V^{\pi_{t+1}})
        \\
        & \le
        \epsilon + \kappa \lVert \wt V^\star - V^\star \rVert_\infty + \kappa \lVert V^\star - V^{\pi_t} \rVert_\infty - \beta
        \\
        & \le
        \kappa \lVert V^\star - V^{\pi_t} \rVert_\infty + (\kappa+1) \epsilon - \beta
        =
        \kappa \lVert V^\star - V^{\pi_t} \rVert_\infty,
    \end{align*}
    where the second step follows since $T^{\pi_{t+1}}[V^{\pi_{t}}](s) \ge T[V^{\pi_{t}}](s)$, the third step is by the definition of $\epsilon$ and the state $s$ in the first group, the forth step is by monotonicity of PI, and the last step is by definition of $\beta$.
    
    \item $| T[V^{\pi_t}](s) - \wt V^\star(s) | > \kappa \lVert \wt V^\star - V^{\pi_t} \rVert_\infty - \beta$.
    Thus, \algTPI{} uses $\hkappa$-step lookahead for $s$, which implies:
    \begin{align*}
        V^\star(s) - V^{\pi_{t+1}}(s)
        & =
        T T^{\hkappa - 1} [V^\star] (s) - T T^{\hkappa - 1}[V^{\pi_{t}}](s) 
        \\
        & \quad + 
        T T^{\hkappa - 1} [V^{\pi_{t}}](s) - T^{\pi_{t+1}} T^{\hkappa - 1} [V^{\pi_{t}}](s) 
        \\
        & \quad + 
        T^{\pi_{t+1}} T^{\hkappa - 1}[V^{\pi_{t}}](s) - T^{\pi_{t+1}}[V^{\pi_{t+1}}](s)
        \\
        & \le
        T T^{\hkappa - 1} [V^\star] (s) - T T^{\hkappa - 1}[V^{\pi_{t}}](s) 
        \\
        & \quad + 
        T^{\pi_{t+1}} T^{\hkappa - 1}[V^{\pi_{t}}](s) - T^{\pi_{t+1}}[V^{\pi_{t+1}}](s)
        \\
        & \le
        \gamma \lVert T^{\hkappa - 1} [V^\star] - T^{\hkappa - 1}[V^{\pi_{t}}] \rVert_\infty 
        \\
        & \quad + 
        \gamma P^{\pi_{t+1}} (T^{\hkappa - 1}[V^{\pi_{t}}] - V^{\pi_{t+1}})
        \\
        & \le
        \gamma \lVert T^{\hkappa - 1} [V^\star] - T^{\hkappa - 1}[V^{\pi_{t}}] \rVert_\infty
        \\
        & \le
        \gamma^{\hkappa} \lVert V^\star - V^{\pi_{t}} \rVert_\infty
        \le
        \kappa \lVert V^\star - V^{\pi_{t}} \rVert_\infty,
    \end{align*}
    where the second step follows since $T^{\pi_{t+1}} T^{\hkappa - 1}[V^{\pi_{t}}](s) \ge T T^{\hkappa - 1}[V^{\pi_{t}}](s)$, the forth step is by monotonicity of PI, the fifth step is since $T^{\hkappa-1}$ is $\gamma^{\hkappa-1}$-contracting (because $T$ is $\gamma$-contracting), and the last step is by definition of $\hkappa$.
\end{enumerate}

We now follow the proof of \citet{scherrer2016improved} for bounding the number of iterations of PI, and introduce the notation $A^{\pi'}_\pi = T^{\pi'}[V^\pi] - V^\pi$.
Then,
\begin{align*}
    \lVert A^{\pi_t}_{\pi^\star} \rVert_\infty
    & =
    \lVert T^{\pi_t}[V^\star] - V^\star \rVert_\infty
    \le
    \lVert V^\star - V^{\pi_t} \rVert_\infty
    \le
    \kappa^t \lVert V^\star - V^{\pi_0} \rVert_\infty
    \le
    \frac{\kappa^t}{1 - \gamma} \lVert A^{\pi_0}_{\pi^\star} \rVert_\infty,
\end{align*}
where the first and last inequalities are by \citet{scherrer2016improved}, and the second inequality is because the sequence is contracting with coefficient $\kappa$.
By the definition of the max-norm, and as a $A^{\pi_0}_{\pi^\star} \le 0$ (using the fact that $\pi^\star$ is optimal), there exists a state $s_0$ such that $- A^{\pi_0}_{\pi^\star}(s_0) = \lVert A^{\pi_0}_{\pi^\star} \rVert_\infty$.
We deduce that for all $t$,
\[
    - A^{\pi_t}_{\pi^\star}(s_0)
    \le
    \lVert A^{\pi_t}_{\pi^\star} \rVert_\infty
    \le
    \frac{\kappa^t}{1 - \gamma} \lVert A^{\pi_0}_{\pi^\star} \rVert_\infty
    =
    - \frac{\kappa^t}{1 - \gamma} A^{\pi_0}_{\pi^\star} (s_0).
\]
As a consequence, the action $\pi_t(s_0)$ must be different from $\pi_0(s_0)$ when $\frac{\kappa^t}{1 - \gamma} < 1$, that is for all values
of $t$ satisfying
\[
    t
    \ge 
    t^\star
    =
    \left\lceil \frac{\log \frac{1}{1 - \gamma}}{\log \frac{1}{\kappa}} \right\rceil.
\]
In other words, if some policy $\pi$ is not optimal, then one of its non-optimal actions will be eliminated for good after at most $t^\star$ iterations. 
By repeating this argument, one can eliminate all non-optimal actions
(there are at most $S (A - 1)$ of them), and the result follows.

\paragraph{Bounding the per-iteration computational complexity.}
In every iteration $t$, we first perform $1$-step improvement in all of the states.
This has a computational cost of $O(S \cdot \costh{1})$.
By definition of $\thetakappa$, after performing $1$-step improvement in all of the states, there are at most $\thetakappa S$ states that are not contracted by at least $\kappa$ (with respect to \appxVstar{} and after subtracting the correction term $\beta$).
Thus, by definition of the \algTPI{} algorithm, we perform $\hkappa$-step improvement in at most $\thetakappa S$ states.
This has a computational cost of $O(\thetakappa S \cdot \costh{\hkappa})$.

\subsection{Proof of Theorem~\ref{thm:quantiles-PI}}
\label{sec: QPI full proof}
\paragraph{Bounding the number of iterations to convergence.}
The proof follows the same path as the proof of \cref{thm:threshold-PI}, but now the operator $B_t$ induced by the algorithm is $\kappatheta$-contracting instead of $\kappa$-contracting.
Since \appxVstar{} is $m$-order-preserving, the fact that we use the quantiles $(\theta_1 + m/S,\dots,\theta_H + m/S)$ instead of $(\theta_1,\dots,\theta_H)$ ensures that indeed $\kappatheta$ contraction is obtained in all the states.

\paragraph{Bounding the per-iteration computational complexity.}
By definition of the \algQPI{} algorithm, in every iteration $t$, we perform $h$-step lookahead in at most $\theta_h S + m$ states.
For every $h=1,\dots,H$, the computational complexity of performing $h$-step lookahead in $\theta_h S + m$ states is at most $O \left( (\theta_h S + m) \cdot \costh{h} \right)$, which gives a total computational complexity of $O \left(S \cdot \sum_{h=1}^H (\theta_h + m/S) \cdot \costh{h} \right)$ per iteration.

\section{Additional Plots}\label{supp:plots}

\begin{figure*}[h]
    \centering 
    \includegraphics[width=\textwidth]{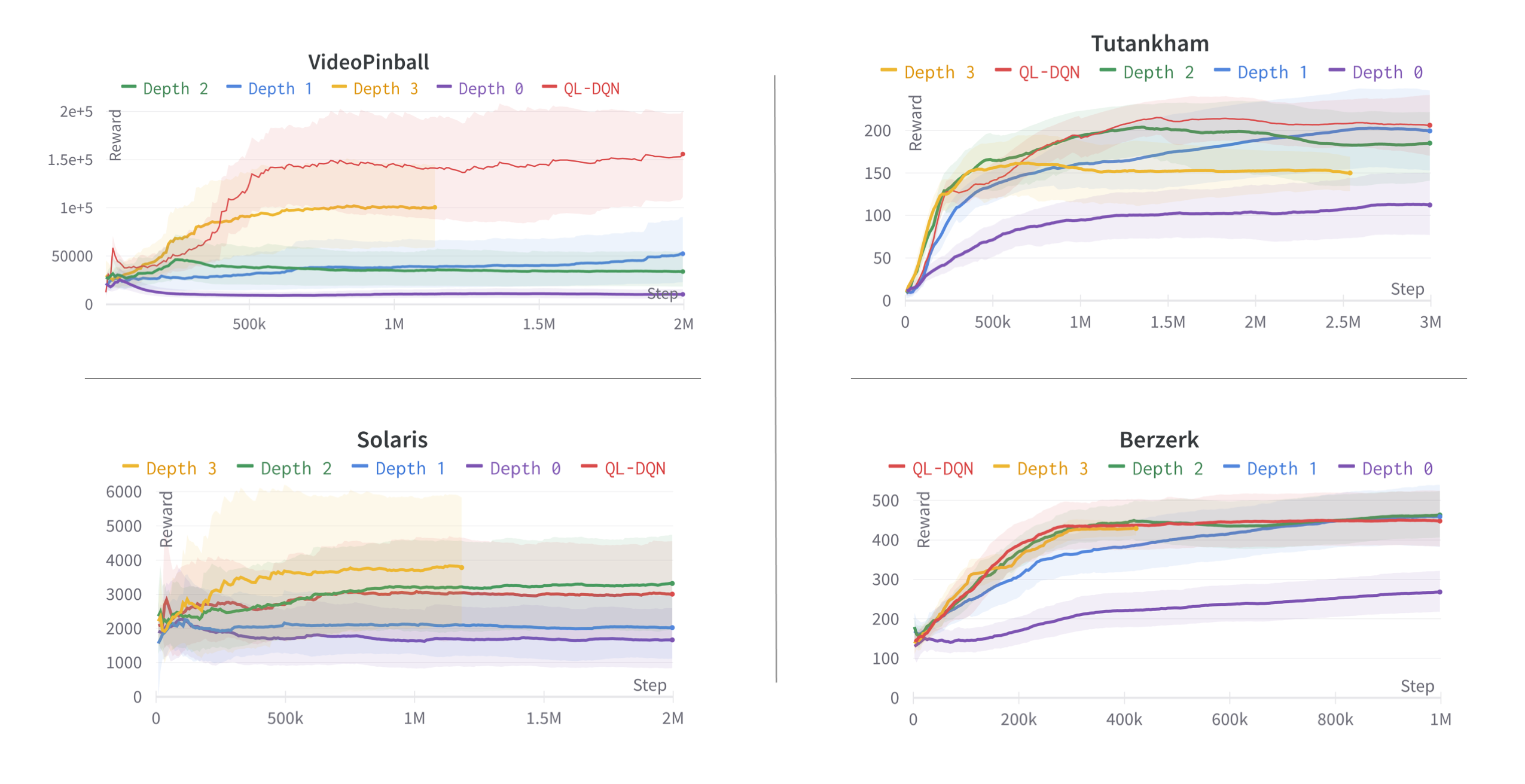}
    \caption{Figure \ref{fig:atari-experiments} where the x-axis is number of steps instead of training time. As expected, the larger the horizon, the better the performance in terms of environment steps (as opposed to wallclock time). Tutankham with depth 3 is the one exception here, which we found to be less stable (and hence the large variance). }
  \label{fig:atari-experiments_bystep}
\end{figure*}

\end{document}